\makeatletter
\def\input@path{{latex/}}
\let\arxivbibliographystyle\bibliographystyle
\renewcommand{\bibliographystyle}[1]{\arxivbibliographystyle{latex/#1}}
\AtBeginDocument{%
  \let\arxivbibliography\bibliography
  \renewcommand{\bibliography}[1]{\arxivbibliography{latex/#1}}%
}
\makeatother
\PassOptionsToPackage{table}{xcolor}
\documentclass[11pt]{article}

\usepackage[final]{acl}

\usepackage{times}
\usepackage{latexsym}

\usepackage[T1]{fontenc}
\usepackage[utf8]{inputenc}

\usepackage{microtype}

\usepackage{inconsolata}

\usepackage{graphicx}

\usepackage{hyperref}
\usepackage{url}
\usepackage{booktabs}

\usepackage{lineno}

\definecolor{darkblue}{rgb}{0,0,0.5}
\hypersetup{colorlinks=true, citecolor=darkblue, linkcolor=darkblue, urlcolor=darkblue}

\usepackage{subcaption}
\usepackage{booktabs} % for professional tables

\usepackage{amsmath}
\usepackage{amssymb}
\usepackage{mathtools}
\usepackage{amsthm}
\usepackage{dblfloatfix}
\usepackage{placeins}
\usepackage[capitalize,noabbrev]{cleveref}
\theoremstyle{plain}
\newtheorem{theorem}{Theorem}[section]
\newtheorem{proposition}[theorem]{Proposition}

\theoremstyle{definition}

\theoremstyle{remark}

\usepackage{hyperref}
\usepackage{url}
\usepackage{caption}   
\usepackage{placeins}  
\usepackage{booktabs}
\usepackage{siunitx}
\usepackage{threeparttable}
\usepackage{caption}
\usepackage{makecell}
\usepackage{adjustbox}
\usepackage{ulem} 
\usepackage{listings}
\usepackage{tcolorbox}
\tcbuselibrary{breakable, skins}
\newenvironment{llmframe}[1][Raw Model Output]{%
  \begin{tcolorbox}[
    breakable, enhanced, sharp corners,
    colback=gray!2, colframe=black!40, boxrule=0.5pt,
    left=6pt,right=6pt,top=6pt,bottom=6pt,
    title=#1, fonttitle=\bfseries
  ]%
}{\end{tcolorbox}}

\newcommand{\bd}{\boldsymbol}

\definecolor{LightCyan}{rgb}{0.88,0.95,1} 
\usepackage[textsize=tiny]{todonotes}

\title{GMTS: Gradient Magnitude-based Token Selection Improves RLVR Training for LLM Reasoning}

\author{
  \textbf{Outongyi Lv}\textsuperscript{1*} \quad
  \textbf{Yuanwei Zhang}\textsuperscript{1*} \quad
  \textbf{Xiaoqun Zhang}\textsuperscript{1,2\textdagger} \\
  \textsuperscript{1}School of Mathematical Sciences, Shanghai Jiao Tong University \\
  \textsuperscript{2}Institute of Natural Sciences, Shanghai Jiao Tong University \\
  \texttt{\{harry\_lv,sjtuzyw,xqzhang\}@sjtu.edu.cn}
}

\begin{document}
\maketitle
\begingroup
\renewcommand{\thefootnote}{\fnsymbol{footnote}}
\footnotetext[1]{The first two authors contributed equally and are listed alphabetically.}
\footnotetext[2]{Corresponding author.}
\endgroup
\begin{abstract}

Reinforcement learning (RL), particularly RL with Verifiable Rewards (RLVR), has recently emerged as a central paradigm for enhancing large language models' (LLMs) reasoning abilities, demonstrating remarkable effectiveness across reasoning tasks. Recent studies suggest that high-entropy tokens play an exceptionally important role in model training, since training with only the highest 20\% entropy tokens yields significant performance gains. However, why such high-entropy tokens are beneficial remains insufficiently understood. In this work, we find that although high-entropy tokens within one answer tend to correlate with large gradient magnitude, entropy alone fails to consistently reflect token importance across different answers, considering the variations in the answer-level reward signals. Based on this observation, we introduce the \textbf{G}radient \textbf{M}agnitude-based \textbf{T}oken \textbf{S}election (GMTS) method to quantify token importance, which leverages the entropy–gradient connection to approximate gradient-magnitude rankings for token selection. We find that training on the top 20\% tokens ranked by GMTS consistently outperforms entropy-based token selection across three reasoning domains and various model sizes, suggesting that GMTS provides a more fine-grained estimate of token contribution for RLVR training.\footnote{Our code is available at \url{https://github.com/outongyiLv/GMTS}}
\end{abstract}

\section{Introduction}
Large language models (LLMs) have recently shown remarkable capabilities across diverse domains, such as mathematics, image quality assessment, speech, and multimodality \citep{achiam2023gpt,wu2025visualquality,shen2025vlm,bie2026llada2}. Reinforcement learning (RL) has made substantial contributions to the advancement of LLMs, demonstrating outstanding effectiveness in enhancing performance \citep{ouyang2022training,lee2024rlaif,shinn2023reflexion}. DeepSeek introduced Group Relative Policy Optimization (GRPO) \citep{shao2024deepseekmath}, a method that dispenses with the value model in favor of simple rule-based rewards and adopts group-level advantage estimation, leading to significant improvements on reasoning benchmarks. Based on GRPO, several variants, such as Dynamic sAmpling Policy Optimization (DAPO) \citep{yu2025dapo} and Group Sequence Policy Optimization (GSPO) \citep{zheng2025group}, have been proposed, showing that RLVR provides a scalable and efficient paradigm for advancing the reasoning abilities of LLMs. Nevertheless, these methods adopt a uniform objective function across all tokens, without fully accounting for the fact that different tokens contribute unevenly to RLVR.

\begin{figure*}[h!]
  \centering
 \includegraphics[width=0.93\linewidth]{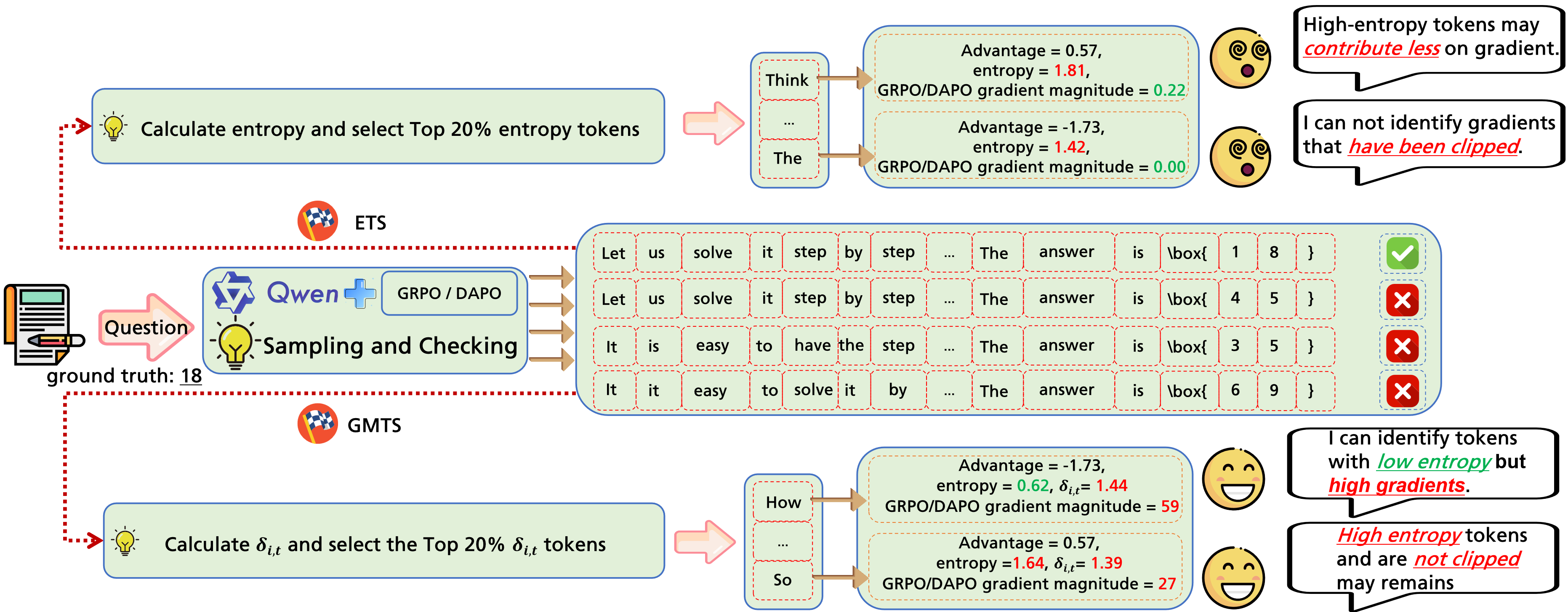} 
  \caption{Overall architecture of the GMTS framework: High-entropy tokens selected by ETS may contribute less to the token gradient or be affected by gradient clipping. By computing $\delta_{i,t}$, GMTS mitigates this problem. Moreover, GMTS can identify low-entropy tokens that are overlooked by ETS but still contribute more on token gradient.}
  \label{fig:overall}
\end{figure*}

Recent studies have focused on distinguishing the token importance by designing token-level advantage functions \citep{yang2025not,liu2026gdpo, ma2026fipo}, introducing new objective functions \citep{cui2025entropy,li2025cure}, and filtering out low importance tokens \citep{wang2025beyond,lv2026tokens}. Concurrently, several studies have also emphasized the central role of high token entropy in enhancing RLVR \citep{wang2025beyond,cui2025entropy,mengsparse}, and controlling entropy provides a way to strengthen the RLVR reasoning ability. In these works, \citet{wang2025beyond} shows that training with only the top 20\% high-entropy tokens yields substantially better performance across the reasoning tasks than training with all tokens.

\begin{center}
 \textit{However, can entropy alone sufficiently indicate token importance?}
\end{center}

In this work, we observe that within a single answer, high-entropy tokens tend to be associated with large gradient magnitude. However, due to variations in reward signals and sample-specific characteristics, entropy alone fails to adequately capture token importance across different answers, whereas gradient magnitude provides a more reliable quantification. Based on this, we propose the \textbf{G}radient \textbf{M}agnitude-based \textbf{T}oken \textbf{S}election (GMTS) method. In contrast to \textbf{E}ntropy-based \textbf{T}oken \textbf{S}election (ETS) \citep{wang2025beyond} under the same settings, GMTS can more effectively distinguish token importance across different answers. Empirically, our experiments show that GMTS consistently improves performance across diverse reasoning tasks and model sizes. For example, GMTS achieves approximately \textbf{1-3} percentage point gains on math reasoning benchmarks. Furthermore, GMTS is also simple to implement and can be easily integrated into existing RLVR frameworks such as GRPO and DAPO. Since all the required components are already available during standard training, GMTS introduces minimal computational overhead. In summary, the contributions of this work are threefold:

\begin{itemize}
    \item \textit{Establish the relation between token entropy and gradient magnitude.}
    We observe that within a single answer, tokens with high entropy correlate strongly with large gradient magnitude, which explains why entropy is an effective indicator of token importance. However, we further demonstrate that this correlation does not hold consistently across different answers when their reward signals vary significantly. This motivates the use of gradient magnitude as a more robust and reliable metric for quantifying token importance.
    
    \item \textit{GMTS for quantifying token importance.} Building upon the relationship between entropy and gradient magnitude, we propose GMTS for quantifying token importance. GMTS effectively retains the benefits of using entropy as a signal while overcoming its key limitations in quantifying token importance across different answers.

    \item \textit{Empirical studies across reasoning domains and model scales.}
    Our empirical studies show that GMTS consistently outperforms the baselines across three reasoning domains and multiple model scales. Detailed experimental results and settings are provided in \cref{experiment}. These results demonstrate the effectiveness of GMTS in improving RLVR performance, as well as its adaptability across different tasks and model scales.

\end{itemize}

\subsection{Related Works}

\paragraph{Token-level advantage estimation.} 
\citet{chen2025seed} leverage the differences between positive and negative advantage-induced token-level signals, together with entropy, to redistribute token-level advantages. \citet{tan2025gtpo} assign entropy-weighted rewards to emphasize high-entropy tokens and down-weight low-entropy ones, mitigating model over-updating. At the same time, \citet{cheng2026reasoning} show that high-entropy tokens correlate with exploration-facilitating behaviors and directly incorporate entropy into advantage estimation. \citet{wang2025emergent} focus on high-impact strategy tokens in reasoning and amplifies their learning signals by modifying the advantages, while \citet{ma2026fipo} use token-level future KL to reassess each token's contribution.

\paragraph{Objective function design.} \citet{wang2025stabilizing} introduce an entropy-aware clipping strategy that applies stricter constraints to low-entropy tokens while relaxing updates for high-entropy ones. \citet{cui2025entropy} observe that decreasing entropy can lead to entropy collapse and address this issue by dynamically adjusting the entropy regularization strength. Building on this direction, \citet{li2025cure} identify critical positions using high-entropy tokens, resample them to construct branched trajectories, and jointly train RLVR on both the original and branched trajectories. In addition, \citet{yao2025diversity} systematically study diversity in RLVR reasoning and incorporate diversity-aware training into the RLVR objective.

\paragraph{Low importance token filtering.} \citet{wang2025beyond}
find that high-entropy tokens often serve as critical forking thinking tokens, and leverage this observation by selecting only the top 20\% entropy tokens for RLVR training, achieving significant performance gains. GMTS also lies in this direction, filtering tokens with low gradient magnitude and using remaining tokens in RLVR training.

\section{Preliminaries}
\subsection{LLM Formulation and Notations}

Let $\mathcal{V}=\{x^1,x^2,\dots,x^V\}$ denote the finite vocabulary of tokens, where $V$ is the size of the vocabulary. Given an input query $\bd{q}$, an LLM parameterized by $\theta$ defines an autoregressive policy $\pi_\theta$ that generates an output sequence $\bd{o}=(o_1,\dots,o_T)$ token by token. At step $t$, conditioned on the query and the previously generated tokens $\bd{o}_{<t}$, the model predicts a distribution over the vocabulary $\pi_\theta(\cdot \mid \bd{q},\bd{o}_{<t})=[p_{t,1},p_{t,2},\dots,p_{t,V}]$, from which the next token $o_t$ is sampled or selected. This process continues until an end-of-sequence token is generated or the maximum generation length is reached. We define the entropy of the predictive distribution at step $t$ as 
$$
E_t := -\sum_{k=1}^{V} p_{t,k}\log p_{t,k}.
$$
This notation will be used throughout our method.

\subsection{RLVR Methods}
This section briefly reviews two representative RLVR methods: GRPO and DAPO.

GRPO \citep{shao2024deepseekmath} is a PPO-style method \citep{schulman2017proximal} that estimates advantages from a group of sampled responses, thereby removing the need for a separate value model. Given a query $\bd{q}\sim \mathcal{D}$, GRPO samples $G$ answers $[\bd{o}_1, \bd{o}_2, \ldots, \bd{o}_G]$ from $\pi_{\text{old}}$, with rewards $R = [R_1, R_2, \ldots, R_G]$ to compute the advantage $A_{i} = \frac{R_i-\text{mean}(R)}{\text{std}(R)} $. This response-level advantage is assigned to all tokens in $\bd{o}_i$, i.e., $A_{i,t}=A_i$. The optimization objective of GRPO is:
\begin{equation}
    \max_{\theta} \mathbb{E}_{\bd{q}} \left[ \frac{1}{G}\sum_{i=1}^G\frac{1}{|\bd{o}_i|}\sum_{t=1}^{|\bd{o}_i|} \ell_{i, t}(\theta)\right],
\end{equation}
where $\ell_{i,t}(\theta)$ denotes the token-level PPO-style objective:
$$
\begin{aligned}
\ell_{i,t}(\theta):= &\min\big(\text{clip}(r_{i, t}(\theta), 1-\epsilon_1, 1+\epsilon_2)A_{i, t},\\
&r_{i,t}(\theta) A_{i,t}\big)-\beta \cdot\mathbb{D}^{KL}_{i,t}(\theta).
\end{aligned}
$$
In this expression, $\mathbb{D}^{KL}_{i,t}(\theta)$ denotes the KL divergence between the current and reference predictive distributions $\pi_{\theta}(\cdot \mid q,\boldsymbol{o}_{<t})$ and $\pi_{\text{ref}}(\cdot \mid q,\boldsymbol{o}_{<t})$, $\pi_{\text{ref}}$ is the reference policy and $r_{i, t}(\theta) = \pi_{\theta}(o_{i,t}|\bd{q}, \bd{o}_{i, <t})/\pi_{\text{old}}(o_{i,t}|\bd{q}, \bd{o}_{i, <t})$. $\beta$ is the parameter for the KL penalty term and $
\epsilon_1, \epsilon_2$ are clipping hyperparameters. In practice, following \citet{schulman2020approximating}, we estimate the per-token KL penalty using a single-sample unbiased estimator.

DAPO builds on GRPO with several practical modifications \citep{yu2025dapo}. Compared with GRPO, DAPO removes the KL penalty from the objective and adopts dynamic sampling to discard uninformative groups in which all responses are either correct or incorrect. It also uses token-level loss averaging and a larger upper clipping threshold $\epsilon_2$ to improve training stability.

\section{Methodology}
\begin{figure*}[t]
  \centering
    \includegraphics[width=0.99\linewidth]{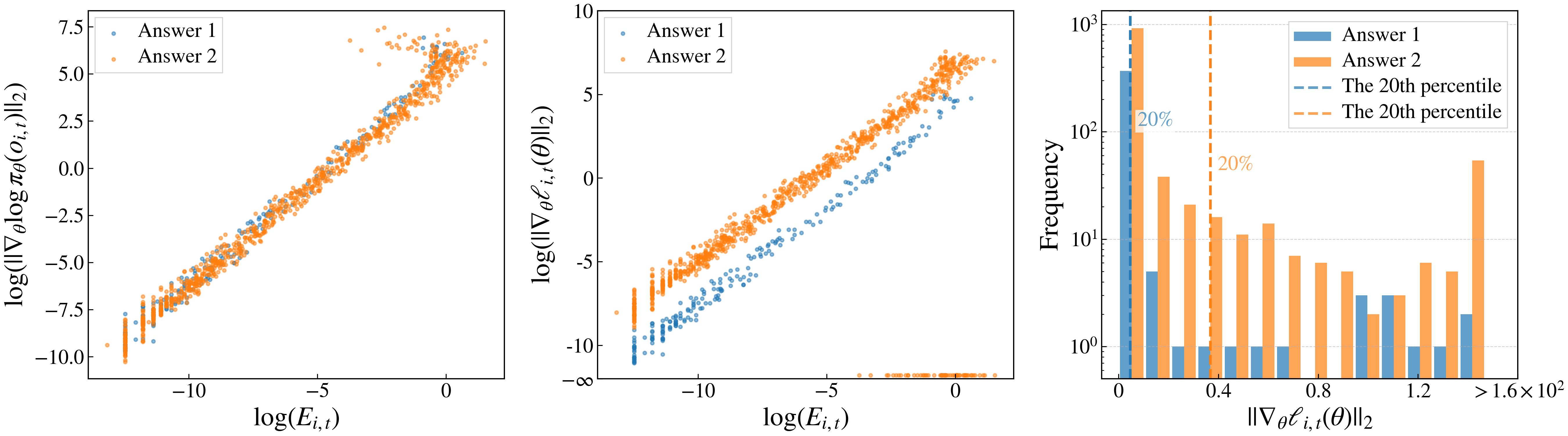} 
  \caption{We use DAPO to train Qwen2.5-math-1.5B on MATH-$12$K dataset under the setting of $G=16$. At training step $100$, we selected two answers with different advantages in one question group, and computed the gradients $\nabla_{\theta} \ell_{i,t}(\theta)$, $\nabla_{\theta} \log \pi_{\theta}(o_{i,t})$, and entropy $E_{i,t}$ of all the tokens. With these collected data, we plotted entropy against the token-level log-probability gradients under the log scale \textbf{(left)}, entropy against the true gradient magnitude under the log scale \textbf{(middle)}, and the distributions of true gradient magnitude together with their 20th percentile boundaries \textbf{(right)}.}
  \label{fig:method}
\end{figure*}

\begin{figure*}[t]
  \centering
    \includegraphics[width=0.99\linewidth]{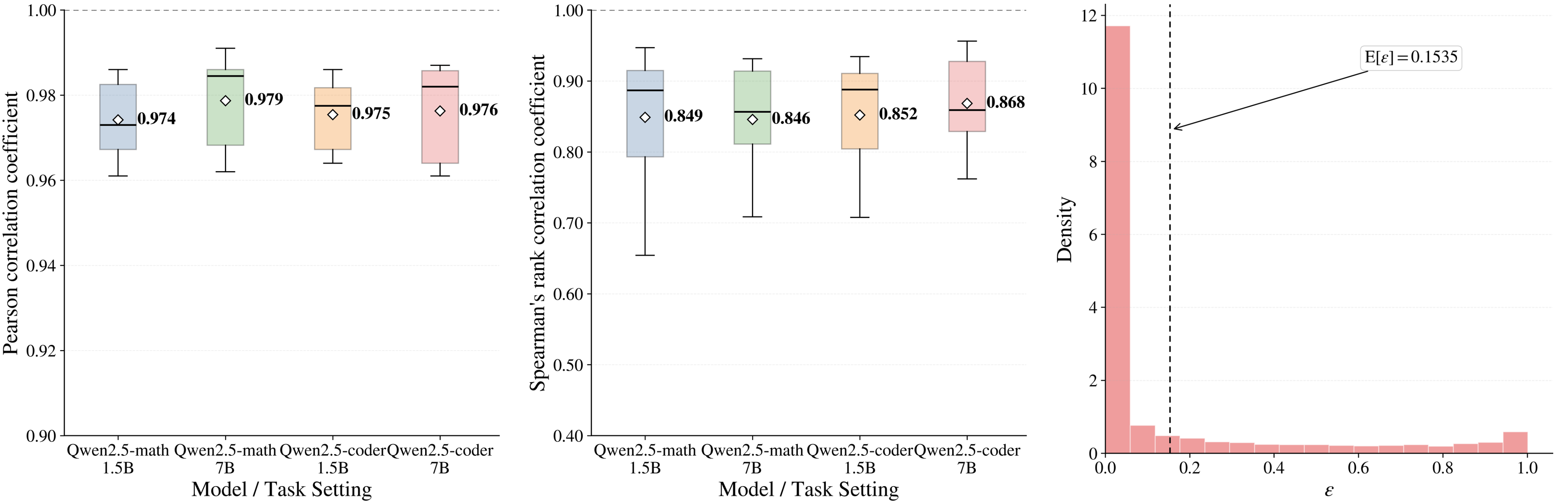} 
  % \caption{\textcolor{blue}{We selected 10 questions from MATH-500 / Kod-Code using the Qwen2.5-math / Qwen2.5-coder at the 1.5B and 7B scales, respectively. For each question, we generated 16 responses and computed, within the response group of the same question, the Pearson correlation coefficient between $\log(E_{i,t})$ and $\log(\|\nabla_{\theta} \log \pi_{\theta}(o_{i,t})\|_2)$ \textbf{(left)}, and Spearman's rank correlation coefficient between $|E_{i,t}\cdot \omega_{i,t}(\theta)|$ and $\|\nabla_{\theta} \ell_{i,t}(\theta)\|_2$ \textbf{(middle)}. We further plot the distribution of $\varepsilon$ over all 160 generated responses on the MATH-500 benchmark \textbf{(right)}}.}
  \caption{We select 10 questions from each of MATH-500 and KodCode, and evaluate Qwen2.5-Math and Qwen2.5-Coder at the 1.5B and 7B scales on the math and code benchmarks, respectively. For each question, we generate 16 responses. Within each group of responses to the same question, we compute the Pearson correlation coefficient between $\log(E_{i,t})$ and $\log\bigl(\|\nabla_{\theta} \log \pi_{\theta}(o_{i,t})\|_2\bigr)$ \textbf{(left)}, and the Spearman rank correlation coefficient between the GMTS score $|E_{i,t}\cdot \omega_{i,t}(\theta)|$ and the true gradient magnitude $\|\nabla_{\theta} \ell_{i,t}(\theta)\|_2$ \textbf{(middle)}. We further plot the distribution of $\varepsilon$ over all 160 generated responses on MATH-500 \textbf{(right)}.}
  \label{fig:method_add2}
\end{figure*}

However, both GRPO and DAPO treat tokens in the same response equally in training objectives by assigning the same response-level advantage to all tokens, thereby ignoring their unequal contributions to policy updates. In this section, we first analyze token-level differences from the perspective of gradient magnitude and connect them to token entropy. We then introduce GMTS, a gradient-magnitude-based token selection method for RLVR training.

\paragraph{Gradient of each token.}
Following \citet{yang2025not}, the gradient of the token-level objective $
\ell_{i,t}(\theta)$ can be expressed as a scalar coefficient times the score function:
$$
\nabla_\theta \ell_{i,t}(\theta) =  \omega_{i,t}(\theta) \cdot\nabla_\theta \log \pi_\theta (o_{i,t}),
$$
where we use $\pi_\theta (o_{i,t})$ to represent $\pi_{\theta}(o_{i,t}|\bd{q}, \bd{o}_{i,<t})$ for ease of exposition. The coefficient  $\omega_{i,t}(\theta)$ is given by
$$
\begin{aligned}
\omega_{i,t}(\theta) &= r_{i, t}(\theta) A_{i,t}\cdot \mathbb{I}_{\epsilon_1, \epsilon_2}(r_{i,t}(\theta),A_{i,t})\\
&+\beta \frac{\pi_{\text{ref}}(o_{i,t})}{\pi_{\theta}(o_{i,t})}-\beta.
\end{aligned}
$$
Here, $\mathbb{I}_{\epsilon_1, \epsilon_2}(\cdot,\cdot)$ denotes the clipping indicator function:
\begin{equation*}  
\mathbb{I}_{\epsilon_1, \epsilon_2}(a,b) = 
\left\{  
     \begin{array}{ll} 
     0 & \text{if } b > 0,\  a > 1+\epsilon_2, \\  
     0 & \text{if } b < 0,\  a < 1-\epsilon_1, \\
     1 & \text{otherwise}. 
     \end{array}
\right.  
\end{equation*}
The decomposition of $\nabla_\theta \ell_{i,t}(\theta)$ shows that $\omega_{i,t}(\theta)$ collects the effective
token-level learning signals in RLVR, including the advantage value, clipping, and KL
regularization terms. Therefore, it directly scales the policy gradient at each
token and determines how strongly that token contributes to the overall parameter
update. In DAPO, the KL term is omitted, so $\omega_{i,t}(\theta)$ reduces to
$r_{i,t}(\theta)A_{i,t}
\mathbb{I}_{\epsilon_1,\epsilon_2}(r_{i,t}(\theta),A_{i,t})$.
When $\pi_\theta$ remains close to $\pi_{\rm old}$ and the clipping constraint is
inactive, we have $r_{i,t}(\theta)\approx 1$ and hence
$\omega_{i,t}(\theta)$ is approximately $A_{i,t}$.

\paragraph{Token gradient magnitude and entropy.} 
As illustrated in the left panel of \cref{fig:method}, the magnitude of the token-level log-probability gradient 
$\nabla_{\theta}\log \pi_{\theta}(o_{i,t}|\bd{q}, \bd{o}_{<t})$ exhibits an approximately linear relationship with token entropy when both quantities are plotted on a logarithmic scale. 
To better understand this phenomenon, we analyze the gradient structure in detail. Consider a query input $\bd{q}$ and an output sequence $\bd{o}$, and focus on the $t$-th generated token $o_t$. 
Let $\bd{z}_t = [z_{t,1}, z_{t,2}, \dots, z_{t, V}]$ denote the logits produced by the final layer of the LLM at time step $t$, and let
$\pi_{\theta}(\cdot\mid \bd{q}, \bd{o}_{<t}) = \operatorname{Softmax}(\bd{z}_t) := [p_{t,1}, p_{t,2}, \dots, p_{t,V}]$
be the resulting predictive distribution over the vocabulary. 
\begin{proposition}
\label{theorem1}
Let $G_{t}$ be the $\ell_2$-norm of the gradient of the log-probability $\log \pi_{\theta}(o_t|\bd{q}, \bd{o}_{<t}) $ with respect to the logits $z_t$, $E_t$ is the entropy of distribution $\pi_{\theta}(\cdot|\bd{q}, \bd{o}_{<t})$. Let $1-\pi_{\theta}(o_{t}|\bd{q}, \bd{o}_{<t})=\varepsilon$, then we have:
    $$\lim_{\varepsilon\rightarrow0} \frac{\log G_{t}}{\log E_t} = 1.$$
\end{proposition}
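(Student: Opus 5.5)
The plan is to compute both quantities explicitly in terms of $\varepsilon$ and show that $\log G_t$ and $\log E_t$ both equal $\log\varepsilon$ up to lower-order terms. The bounds will be uniform over how the remaining mass $\varepsilon$ is spread among the other $V-1$ tokens, which is what makes the limit well defined.

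\emph{Step 1: the gradient norm.} Let $o_t=x^j$. The softmax/log-likelihood identity gives $\partial \log p_{t,j}/\partial z_{t,k} = \mathbb{I}[k=j]-p_{t,k}$, so
\[
G_t^2=(1-p_{t,j})^2+\sum_{k\neq j}p_{t,k}^2=\varepsilon^2+\sum_{k\neq j}p_{t,k}^2 .
\]
The entries $p_{t,k}$ with $k\neq j$ are nonnegative and sum to $\varepsilon$. Hence $0\le\sum_{k\neq j}p_{t,k}^2\le\varepsilon^2$, and so $\varepsilon\le G_t\le\sqrt2\,\varepsilon$. This yields $\log G_t=\log\varepsilon+O(1)$.

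\emph{Step 2: the entropy.} Write $p_{t,k}=\varepsilon q_k$ for $k\neq j$, where $q$ is a probability vector on $V-1$ entries with entropy $H(q)\in[0,\log(V-1)]$. Then
\[
E_t=-(1-\varepsilon)\log(1-\varepsilon)-\varepsilon\log\varepsilon+\varepsilon H(q).
\]
The first term is $\varepsilon+O(\varepsilon^2)$. Therefore
\[
E_t=\varepsilon\bigl(\log(1/\varepsilon)+O(1)\bigr),
\]
with the $O(1)$ bounded by $1+\log(V-1)$ plus $o(1)$; this uses that $V$ is finite. Consequently
\[
\log E_t=\log\varepsilon+\log\log(1/\varepsilon)+o(1).
\]

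\emph{Step 3: the ratio.} Dividing the two expansions gives
\[
\frac{\log G_t}{\log E_t}=\frac{\log\varepsilon+O(1)}{\log\varepsilon+\log\log(1/\varepsilon)+o(1)}.
\]
Both numerator and denominator tend to $-\infty$. Since $\log\log(1/\varepsilon)/\log(1/\varepsilon)\to0$, dividing numerator and denominator by $\log\varepsilon$ shows the ratio tends to $1$.

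The main obstacle is Step 2. The entropy carries an extra $\log(1/\varepsilon)$ factor, so $E_t$ is \emph{not} comparable to $G_t$ up to constants. The argument must therefore rely on this factor being only logarithmic, which is harmless after taking another logarithm. It also needs the $\varepsilon H(q)$ term to be controlled uniformly via the finite-vocabulary bound $H(q)\le\log(V-1)$.
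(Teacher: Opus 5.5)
Your proof is correct and follows the paper's argument. You use the same gradient $e_i - p_t$ with respect to the logits, the same grouping decomposition $E_t=-(1-\varepsilon)\log(1-\varepsilon)-\varepsilon\log\varepsilon+\varepsilon H(q)$, and the same division by $\log\varepsilon$. You are also more careful than the paper, which leaves implicit three points you state: the bound $\varepsilon\le G_t\le\sqrt{2}\,\varepsilon$, the bound $H(q)\le\log(V-1)$ that makes the limit well defined regardless of how the residual mass is spread, and the fact that the extra $\log\log(1/\varepsilon)$ term in $\log E_t$ is $o(\log\varepsilon)$.
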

The detailed derivation for \cref{theorem1} is provided in \cref{app: derivation}. It establishes a direct connection between token entropy and the magnitude of the log-probability gradient with respect to the logits. As shown by the distribution of $\varepsilon$ on the right panel of \cref{fig:method_add2}, most tokens have $\varepsilon$ values close to $0$. This provides a theoretical explanation for the linearity observed in \cref{fig:method}, as well as its slope being close to $1$. Furthermore, let $\frac{\partial \bd{z}_t}{\partial \theta}$ denote the Jacobian of the logits with respect to the model parameters. By the chain rule $\nabla_{\theta} \log \pi_{\theta}(o_t\mid \bd{q}, \bd{o}_{<t})
= \frac{\partial \bd{z}_t}{\partial \theta}
\cdot \nabla_{\bd{z}_t} \log \pi_{\theta}(o_t\mid \bd{q}, \bd{o}_{<t})$, which implies that tokens with higher entropy tend to induce larger gradients with respect to the model parameters.
This analysis provides a principled explanation for the empirical observation that high-entropy tokens may contribute more significantly to parameter updates in RLVR training \citep{wang2025beyond}.

\paragraph{Gradient magnitude token selection.}
However, the coefficients $\omega_{i,t}(\theta)$ in the token gradients can exhibit substantial variability and are particularly influenced by the clipping constraints and the advantage values derived from answer-level reward signals. As shown in the middle and right panels of \cref{fig:method}, tokens belonging to answers with different advantages display notable differences in gradient magnitudes, along with distinct percentile distributions. 
These observations suggest that entropy alone is insufficient to reflect a token's contribution in RLVR training. To address this limitation, we propose the GMTS method, which assesses token importance directly from the gradient magnitude. Since directly computing the true gradient norm is computationally intractable during RL training, we approximate $\nabla_{\theta}\log \pi_{\theta}(o_{i,t}|\bd{q}, \bd{o}_{i,<t})$ using token entropy. And we define the GMTS score of token $o_{i,t}$ as: 
$$\delta_{i, t} = |E_{i, t}\cdot \omega_{i,t}(\theta)|,$$
which serves as a tractable measure of token importance. Based on that, the training objective of GMTS is as follows: 
\begin{equation}
\max_{\theta} \mathbb{E}_{\bd{q}} \left[ \frac{1}{S_{\rho}}\sum_{i=1}^G\sum_{t=1}^{|\bd{o}_i|} {\mathbb{I}[\delta_{i, t}\geq \tau_{\rho}]} \cdot\ell_{i, t}(\theta)\right].
\end{equation}
where $\mathbb{I}(\cdot)$ is the indicator function that evaluates to $1$ if the
condition inside holds and $0$ otherwise. $\rho$ is a predefined ratio specifying the top proportion to be selected, and $\tau_{\rho}$ is the corresponding threshold such that only tokens with $\delta_{i, t}(\theta)\geq \tau_{\rho}$, comprising the top-$\rho$ fraction of all tokens in the batch, are used to compute the gradient, $S_{\rho}$ denotes the number of selected tokens in the group. Thus the loss is averaged only over the selected tokens.

From a computational perspective, calculating $\omega_{i,t}(\theta)$ involves the advantage signal $A_{i,t}$ and the predicted probability from $\pi_{\theta},\pi_{\text{old}}$ and $\pi_{\text{ref}}$ for each token. These quantities are already available during standard RLVR training. Therefore, the additional computational overhead introduced by GMTS is minimal. Moreover, GMTS can be readily integrated into other RLVR training frameworks, with the only difference being the specific formulation of the coefficient $\omega_{i,t}(\theta)$. By leveraging only a small subset of influential tokens for RLVR updates, GMTS offers the potential for more efficient RLVR training.

% \yuanwei{[Discussion in three directions: Advantages, Clipping, KL]}
% \textbf{Comparison with ETS.} Both of these two methods retain only the top-$\rho$ fraction of tokens for policy update. The key difference lies in how token importance is defined and how the selection is performed. 
% As shown in the middle panels of \cref{fig:method}, tokens from different answers with similar entropy can have distinct gradient magnitude. In particular, under PPO-style objectives with ratio clipping, tokens whose probability ratios fall outside the clipping region contribute zero (or near-zero) gradient. However, ETS ranks tokens solely by entropy, which may select tokens that are ultimately inactive in the objective. In contrast, GMTS uses a score that is tied to the effective training signal (through the clipping/weighting term), and therefore prioritizes tokens that actually contribute non-zero gradients to the objective.

\paragraph{Comparison with ETS.}
Both ETS and GMTS retain only the top-$\rho$ fraction of tokens for policy updates, but they differ in how token importance is measured. ETS ranks tokens solely by entropy, whereas GMTS scores each token by $\delta_{i,t}$, which incorporate the effective scalar coefficient in the token gradient. This difference matters in three aspects: (1) Tokens with similar entropy can have very different effects when they belong to responses with different advantages. As shown in the middle panels of \cref{fig:method}, the same level of entropy may correspond to distinct gradient magnitudes due to different response-level reward signals. (2) PPO-style clipping can suppress the policy-gradient term when the probability ratio falls outside the clipping range. In such cases, ETS may still select high-entropy tokens whose gradients are inactive or largely reduced, while GMTS downweights them through the clipping indicator in $\omega_{i,t}(\theta)$. (3) When KL regularization is used, the effective gradient is also affected by the reference-policy correction term, which is not reflected by entropy alone. Thus, GMTS prioritizes tokens that contribute more directly to the actual RLVR update.

% \loty{To more clearly illustrate the differences between ETS and GMTS, we visualize our method in Fig.~\ref{fig:overall}. In the following Section~\ref{experiment}, we empirically demonstrate the advantages of GMTS.} 

% \begin{enumerate}
    
%     \item Tokens with similar entropy can have very different effects when they belong to responses with different advantages. As shown in the middle panels of \cref{fig:method}, the same level of entropy may correspond to distinct gradient magnitudes due to different response-level reward signals. 
%     \item PPO-style clipping can suppress the policy-gradient term when the probability ratio falls outside the clipping range. In such cases, ETS may still select high-entropy tokens whose gradients are inactive or largely reduced, while GMTS downweights them through the clipping indicator in $\omega_{i,t}(\theta)$.
%     \item When KL regularization is used, the effective gradient is also affected by the reference-policy correction term, which is not reflected by entropy alone. Thus, GMTS prioritizes tokens that contribute more directly to the actual RLVR update.
% \end{enumerate}

% \newpage

\section{Numerical Experiments}
\label{experiment}
\subsection{Training and Evaluation Settings}

We implement and evaluate GMTS and ETS within the verl framework\footnote{\url{https://github.com/volcengine/verl}} across GRPO and DAPO. Furthermore, we provide a standalone implementation\footnote{\url{https://github.com/policy-gradient/GRPO-Zero}} that not only gives a clear understanding of GMTS but also supports training under limited GPUs. In our experiments, unless otherwise specified, both ETS and GMTS are trained using the \textbf{Top 20\%} of tokens, consistent with the selection ratio recommended by ETS \citep{wang2025beyond}.

\noindent\textbf{Training Details.} 
We evaluate GMTS across three distinct domains: \textbf{MATH}, \textbf{CODE}, and \textbf{COMMONSENSE (CS)}. Primary results are reported for the 1.5B and 7B model variants, with additional experiments conducted on the 8B model specifically for \textbf{MATH} domain. We use the following training sets: For MATH domain, we utilize the MATH-12K \citep{lightman2023let} dataset (1.5B and 7B) and DAPO-MATH-17K \citep{wang2025beyond} (8B). For CODE domain, we utilize the KodCode \citep{xu2025kodcode}. For CS domain, we utilize the CS-QA \citep{talmor2019commonsenseqa}. Other details for training on these tasks are all summarized in \cref{app: training_details}.

\noindent\textbf{Evaluation Details.} For \textbf{MATH} domain, we evaluate the 1.5B and 7B models on five benchmarks: AIME2024, AMC23, MATH-500, Minerva, and OlympiadBench. For the 8B model, we additionally report results on AIME2025, since it is particularly challenging for small models (1.5B and 7B). For \textbf{CODE} domain, we utilize standard benchmarks widely adopted in the Qwen3-Coder evaluation \footnote{\url{https://github.com/QwenLM/Qwen3-Coder/}}: LiveCodeBench (202407 – 202411), MBPP, HumanEval, and Bigcode-Bench. For \textbf{CS} domain, we utilize the test of CS-QA and more general CS-QA2 \citep{talmor2022commonsenseqa}. For \textbf{MATH} and \textbf{CS} domains, we generate 16 candidate answers for each question and report the average accuracy as \textit{average@16}. All generations are performed with the temperature of $T=1.0$. For \textbf{CODE} domain, we employ the greedy strategy for evaluation.

\begin{figure*}[h!]
  \centering
    \includegraphics[width=0.95\linewidth]{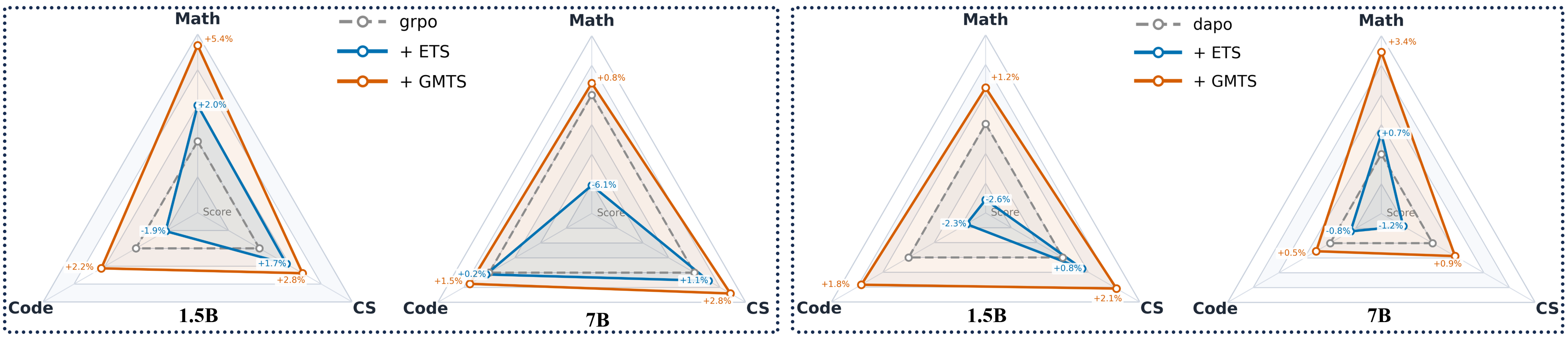} 
    \caption{Overall performance comparison. GMTS consistently outperforms GRPO and DAPO baselines as well as the ETS method across multiple model scales (1.5B and 7B).}
  \label{fig:overall_experiment}
\end{figure*}

\subsection{Main Results}
In \cref{fig:overall_experiment}, we illustrate the global performance of ETS and GMTS across different domains. Next, we show more details of these results on each domain.

\subsubsection{Results on MATH Domain}
We implement ETS/GMTS on GRPO/DAPO, and apply them to the Qwen2.5-math-1.5B and 7B model. All the detailed results are in \cref{tab:results_q25_1p5b,tab:results_q25_7b}. 

\begin{table*}[h!]
  \centering
  \caption{Evaluation on 5 math-reasoning benchmarks (ignore AIME2025) for ETS/GMTS with DAPO/GRPO on \textbf{Qwen2.5-math-1.5B} under \textit{average@16} accuracy (\%).}
  \label{tab:results_q25_1p5b}
  \begin{adjustbox}{max width=0.88\linewidth}
  \begin{threeparttable}
    \begin{tabular}{l *{6}{S[table-format=2.2]}}
      \toprule
      & \multicolumn{1}{c}{AIME2024} & \multicolumn{1}{c}{AMC23} & \multicolumn{1}{c}{MATH-500} & \multicolumn{1}{c}{Minerva} & \multicolumn{1}{c}{OlympiadBench} & \multicolumn{1}{c}{Avg.} \\
      \cmidrule(lr){2-7}
      
      \textbf{Qwen2.5-math-1.5B} & 2.70 & 18.75 & 23.40 & 6.20 & 14.10 & 13.03 \\
      \midrule
      {DAPO}         & \underline{14.17} & 50.62 &  \textbf{73.85} & 29.47 & \underline{37.17} & \underline{41.06} \\
      {{ + ETS}} & \underline{14.17} & 48.44 & 72.25 & 29.01 & 36.16 & 40.01 \\
      \rowcolor{LightCyan}      
      \textbf{ + GMTS}  &  \textbf{14.58} &  \underline{50.63} & \underline{73.83} &  \textbf{30.51} &  \textbf{38.23} & {\bfseries 41.56} \\
      
      \midrule
      {GRPO}         & 10.63 & 48.12 & 71.59 & 28.81 & 34.90 & 38.81 \\
      { + ETS} &  \underline{12.71} & 48.89 & \underline{72.39} & 28.41 & 35.54 & 39.59 \\
      \rowcolor{LightCyan}
      \textbf{ + GMTS}         & 11.88 & \textbf{53.33} &  \textbf{73.06} &  \textbf{30.11} &  \textbf{36.09} & {\bfseries 40.89} \\
      \bottomrule
    \end{tabular}
  \end{threeparttable}
  \end{adjustbox}
\end{table*}

\begin{table*}[h!]
  \centering
  \caption{Evaluation on 5 math-reasoning benchmarks (ignore AIME2025) for ETS/GMTS with DAPO/GRPO on \textbf{Qwen2.5-math-7B} under \textit{average@16} accuracy (\%).}
  \label{tab:results_q25_7b}
  \begin{adjustbox}{max width=0.88\linewidth}
  \begin{threeparttable}
    \begin{tabular}{l *{6}{S[table-format=2.2]}}
      \toprule
      & \multicolumn{1}{c}{AIME2024} & \multicolumn{1}{c}{AMC23} & \multicolumn{1}{c}{MATH-500} & \multicolumn{1}{c}{Minerva} & \multicolumn{1}{c}{OlympiadBench} & \multicolumn{1}{c}{Avg.} \\
      \cmidrule(lr){2-7}
      \textbf{Qwen2.5-math-7B}                 & 7.29 & 21.09 & 34.45 & 6.09 & 10.92 & 15.97 \\
      \midrule
      DAPO           & 18.12 & 62.50 &  \underline{80.91} & 36.49 &  \underline{44.35} & 48.47 \\
      { + ETS} & \underline{20.00} & \underline{63.75} & 80.04 & 36.72 & 43.56 & 48.81 \\

      \rowcolor{LightCyan}
      \textbf{ + GMTS}           &  \textbf{25.00} &  \textbf{65.16} & 80.40 &  \textbf{37.39} & 42.75 & \textbf{50.14} \\
      
      \midrule
      GRPO           & \textbf{27.08} & 62.34 & 79.76 & 35.18 & \underline{42.87} & \underline{49.45} \\
      { + ETS} & 19.17 & 60.67 & 77.72 & 33.65 & 40.96 & 46.43 \\
      % \textbf{+ Forking Tokens (10\%)} & 23.96 & 60.94 & 79.85 & 36.17 & 41.47 & 48.48 \\
      \rowcolor{LightCyan}
      \textbf{ + GMTS}           & 23.33 &  \textbf{64.83} &  \textbf{81.13} &  \textbf{36.32} &  \textbf{43.60} & {\bfseries 49.84} \\
      % \rowcolor{LightCyan}
      % \textbf{ + GMTS Top (10\%)}           & \underline{23.54} &  \underline{63.91} &  \underline{80.32} &  \underline{36.19} &  42.41 & 49.27 \\
      \bottomrule
    \end{tabular}
  \end{threeparttable}
  \end{adjustbox}
\end{table*}

\begin{table*}[h!]
  \centering
  \caption{Evaluation on 6 math-reasoning benchmarks for ETS/GMTS with DAPO on \textbf{Qwen3-8B} under \textit{average@16} accuracy (\%).}
  \label{tab:results_q3_8b}
  \begin{adjustbox}{max width=0.92\linewidth}
  \begin{threeparttable}
    \begin{tabular}{l *{7}{S[table-format=2.2]}}
      \toprule
      & \multicolumn{1}{c}{AIME2024} & \multicolumn{1}{c}{AIME2025} & \multicolumn{1}{c}{AMC23} & \multicolumn{1}{c}{MATH-500} & \multicolumn{1}{c}{Minerva} & \multicolumn{1}{c}{OlympiadBench} & \multicolumn{1}{c}{Avg.} \\
      \cmidrule(lr){2-8}
      \textbf{Qwen3-8B}                 & 3.75       & 6.04       & 33.91       & 62.99       & 22.29     & 23.66           & 25.44 \\
      \midrule
      DAPO          & 33.33 & 25.42 & 77.81 & 89.24 & 39.77 & 56.67 & 53.71 \\
      { + ETS} & 34.58 & 26.25 & 77.19 & 89.70 & 40.26 & \textbf{57.43} & 54.23 \\
      \rowcolor{LightCyan}
      \textbf{ + GMTS}           & \textbf{39.79} & \textbf{30.00} & \textbf{79.38} & \textbf{91.14} & \textbf{42.42} & 53.76 & \textbf{56.08} \\
      \bottomrule
    \end{tabular}
  \end{threeparttable}
  \end{adjustbox}
\end{table*}

\cref{tab:results_q25_1p5b,tab:results_q25_7b} show that GMTS Top (20\%) consistently outperforms ETS across both DAPO and GRPO backbones (1.5B-DAPO: \textbf{+1.55}, 1.5B-GRPO: \textbf{+1.30}, 7B-DAPO: \textbf{+1.33}, 7B-GRPO: \textbf{+3.41}). To evaluate the performance of GMTS on larger reasoning models, we extended our experiments to Qwen3-8B, using results from \citet{wang2025beyond} as a baseline. As shown in \cref{tab:results_q3_8b}, GMTS consistently delivers notable improvements over ETS across a wide range of math benchmarks, with substantial improvements in challenging tasks such as AIME2024 \textbf{(+5.21)} and AIME2025 \textbf{(+3.75)}. With an overall average gain of \textbf{1.85\%}, these results demonstrate the generality of GMTS, suggesting it has the potential to scale effectively to larger and more complex reasoning models.

\subsubsection{Results on CODE and CS Domains}
We implement ETS/GMTS on GRPO/DAPO, for code domain, we apply them to the Qwen2.5-coder-1.5B and 7B model. For commonsense domain, we apply them to the Qwen2.5-1.5B-base and 7B model. All the detailed results are in \cref{tab:results_qwen_coder_1.5b_grpo,tab:results_qwen_coder_7b_grpo}.

\begin{table*}[t]
  \centering
  \caption{Evaluation on 4 code-reasoning and 2 CS benchmarks for ETS/GMTS with DAPO/GRPO on \textbf{Qwen2.5-coder-1.5B} under \textit{greedy} accuracy (\%) (left) and \textbf{Qwen2.5-base-1.5B} under \textit{average@16} accuracy (\%) (right).}
  \label{tab:results_qwen_coder_1.5b_grpo}
  
  \begin{adjustbox}{max width=0.92\linewidth}
  \begin{threeparttable}
    \begin{tabular}{l *{5}{S[table-format=2.2]}}
      \toprule
      & \multicolumn{1}{c}{LiveCodeBench} & \multicolumn{1}{c}{HumanEval} & \multicolumn{1}{c}{MBPP} & \multicolumn{1}{c}{Bigcode-Bench} & \multicolumn{1}{c}{Avg.} \\
      \cmidrule(lr){2-6}
      \textbf{1.5B Base Model}                 & 1.33       & 22.04       & 12.21       & 4.20       & 9.95     \\
      \midrule
      DAPO          & 11.02 & 72.50 & 73.14 & 26.31 & 45.74  \\
      { + ETS } & 9.00 & 73.51 & 73.12 & 23.22 & 44.71  \\
      \rowcolor{LightCyan}
      \textbf{ + GMTS }           & 11.02 & \textbf{74.72} & \textbf{73.49} & \textbf{27.06} & \textbf{46.58} \\
      \midrule
      
      GRPO          & 10.31 & 72.91 & 74.14 & 25.10 & 45.62  \\
      { + ETS} & 9.00 & 72.92 & 71.34 & 25.71 & 44.74  \\
      \rowcolor{LightCyan}
      \textbf{ + GMTS}           & \textbf{11.02} & \textbf{75.03} & \textbf{74.21} & \textbf{26.33} & \textbf{46.64} \\
      \bottomrule

    \end{tabular}
  \end{threeparttable}

    \begin{threeparttable}
    \begin{tabular}{*{3}{S[table-format=2.2]}}
      \toprule
      \multicolumn{1}{c}{CS-QA} & \multicolumn{1}{c}{CS-QA2} & \multicolumn{1}{c}{Avg.}  \\
      \cmidrule(lr){1-3}
                         10.26       & 4.14      & 7.20  \\
      \midrule
      77.21 & 52.73 & 64.97 \\
      78.09 & 52.82 & 65.46 \\
      \rowcolor{LightCyan}
      \textbf{79.15} & \textbf{53.51} & \textbf{66.33} \\
      
      \midrule
      77.43 & 51.07 & 64.25 \\
      79.82 & 50.92 & 65.37 \\
      \rowcolor{LightCyan}
      \textbf{79.95} & \textbf{52.16} & \textbf{66.04} \\
      \bottomrule 

    \end{tabular}
  \end{threeparttable}
  \end{adjustbox}
\end{table*}

\begin{table*}[t]
  \centering
  \caption{Evaluation on 4 code-reasoning and 2 CS benchmarks for ETS/GMTS with DAPO/GRPO on \textbf{Qwen2.5-coder-7B} under \textit{greedy} accuracy (\%) (left) and \textbf{Qwen2.5-base-7B} under \textit{average@16} accuracy (\%) (right).}
  \label{tab:results_qwen_coder_7b_grpo}
  
  \begin{adjustbox}{max width=0.95\linewidth}
  \begin{threeparttable}
    \begin{tabular}{l *{5}{S[table-format=2.2]}}
      \toprule
      & \multicolumn{1}{c}{LiveCodeBench} & \multicolumn{1}{c}{HumanEval} & \multicolumn{1}{c}{MBPP} & \multicolumn{1}{c}{Bigcode-Bench} & \multicolumn{1}{c}{Avg.} \\
      \cmidrule(lr){2-6}
      \textbf{7B Base Model}                 & 3.20       & 36.33       & 39.92       & 13.50       & 23.24     \\
      \midrule
      DAPO          & 19.94 & 85.10 & 79.02 & 36.84 & 55.20  \\
      { + ETS} & 19.06 & 84.39 & 77.18 & \textbf{38.42} & 54.76  \\
      \rowcolor{LightCyan}
      \textbf{ + GMTS}           & \textbf{20.13} & \textbf{85.62} & \textbf{79.89} & 36.37 & \textbf{55.50} \\
      \midrule
      
      GRPO          & 20.00 & 84.21 & 76.71 & 36.31 & 54.31  \\
      { + ETS} & 17.31 & 85.46 & 77.02 & \textbf{37.98} & 54.44  \\
      \rowcolor{LightCyan}
      \textbf{ + GMTS}           & \textbf{20.60} & \textbf{85.91} & \textbf{77.43} & 36.58 & \textbf{55.13} \\
      \bottomrule 

    \end{tabular}
\end{threeparttable}
\begin{threeparttable}
    \begin{tabular}{*{3}{S[table-format=2.2]}}
      \toprule
      \multicolumn{1}{c}{CS-QA} & \multicolumn{1}{c}{CS-QA2} & \multicolumn{1}{c}{Avg.}  \\
      \cmidrule(lr){1-3}
      31.56       & 14.85      & 23.21  \\
      \midrule
      84.70 & 66.57 & 75.64 \\
      83.44 & 66.09 & 74.77 \\
      
      \rowcolor{LightCyan}
      \textbf{85.65} & \textbf{66.95} & \textbf{76.30} \\
      
      \midrule
      
      84.00 & 62.39 & 73.20 \\
      84.17 & 63.82 & 74.00 \\
      
      \rowcolor{LightCyan}
      \textbf{84.60} & \textbf{65.92} & \textbf{75.26} \\
      
      \bottomrule 

    \end{tabular}
  \end{threeparttable} 
  \end{adjustbox}
\end{table*}
\begin{figure*}[h!]
  \centering
 \includegraphics[width=0.92\linewidth]{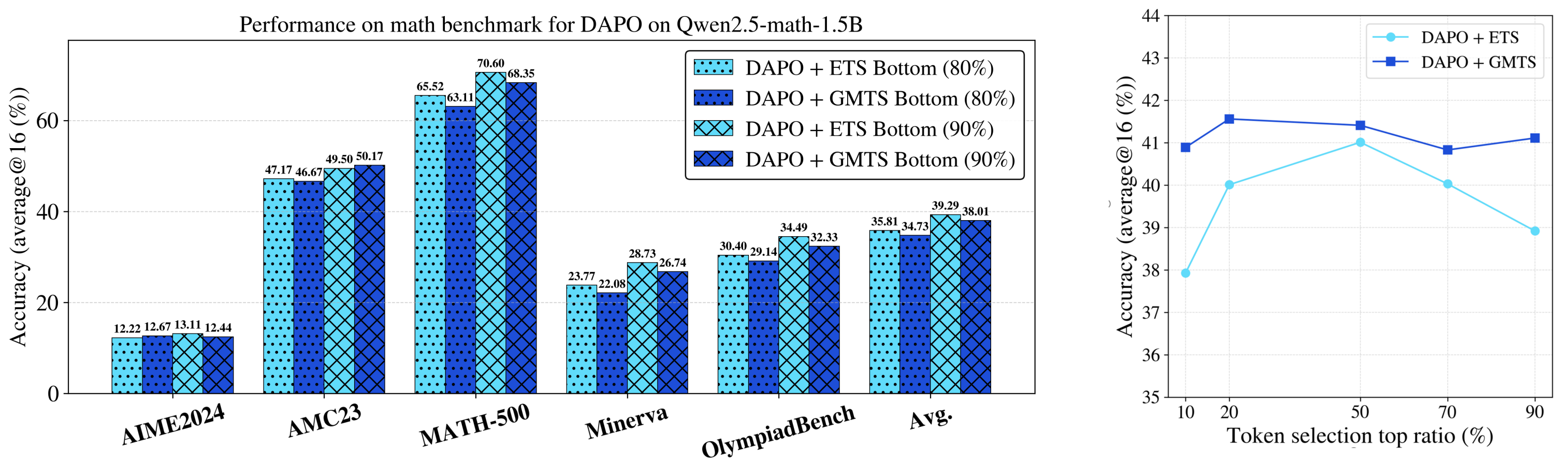} 
  \caption{The \textit{average@16} performance of DAPO, DAPO + ETS Bottom (80\%, 90\%) and DAPO + GMTS Bottom (80\%, 90\%) on Qwen2.5-math-1.5B (\textbf{left}) and the \textit{average@16} overall performance of varying selected ratios for DAPO, DAPO + ETS Top, and DAPO + GMTS Top on Qwen2.5-math-1.5B (\textbf{right}).}
  \label{fig:result_2_new}
\end{figure*}

For HumanEval and MBPP, which consist of Base and Plus subsets, the table reports the average accuracy across both. Similarly, for BigCode-Bench, the reported values represent the mean accuracy across the Full and Hard subsets. As shown in \cref{tab:results_qwen_coder_1.5b_grpo,tab:results_qwen_coder_7b_grpo}, GMTS achieves consistent performance gains within the code domain (1.5B-DAPO: \textbf{+1.87}, 1.5B-GRPO: \textbf{+1.90}, 7B-DAPO: \textbf{+0.74}, 7B-GRPO: \textbf{+0.69}). Furthermore, in the commonsense domain (1.5B-DAPO: \textbf{+0.87}, 1.5B-GRPO: \textbf{+0.67}, 7B-DAPO: \textbf{+1.53}, 7B-GRPO: \textbf{+1.26}). These results validate the generalization of the GMTS method across diverse reasoning tasks.

\subsection{Ablation Study}
\label{ablationstudy}

\noindent\textbf{Linearity and monotonicity consistency.}
We evaluated the linear and monotonic behavior of GMTS using Qwen2.5-math and Qwen2.5-coder at both the 1.5B and 7B scales. As shown in \cref{fig:method_add2} right and middle panels, the linear relationship consistently holds across domains and model scales, indicating that it is not an isolated empirical artifact. Moreover, GMTS exhibits strong monotonicity, suggesting that gradient-magnitude-based selection preserves the ordering of token contributions.

\noindent\textbf{Bottom selection.} 
To examine the role of low gradient magnitude tokens, we follow \citet{wang2025beyond} to evaluate DAPO with bottom selection on Qwen2.5-math-1.5B. As shown in \cref{fig:result_2_new} left and \cref{tab:results_inverse}, GMTS consistently performs worse than ETS, with average drops of \textbf{-1.08} and \textbf{-1.28} at the 80\% and 90\% ratios, respectively. This suggests that low gradient magnitude tokens contribute less to RLVR training, whereas low-entropy tokens may still contain useful signals.

\noindent\textbf{Sensitivity to selection ratios.} To evaluate the impact of selection intensity, we evaluate both DAPO and GRPO on Qwen2.5-math-1.5B with ratios 0.1, 0.2, 0.5, 0.7, and 0.9, with the results reported in \cref{tab:ABL} and \cref{fig:result_2_new} right. It shows that GMTS consistently outperforms ETS in nine out of ten configurations. These findings suggest that the advantages of GMTS are not confined to a specific regime but extend across a broad spectrum of training conditions, providing a more reliable selection mechanism than ETS.

\section{Conclusion}
In this work, we propose GMTS for improving RLVR in LLM reasoning. Although gradient magnitude is correlated with entropy within one answer, entropy alone cannot fully capture token importance across different answers, where the contribution of each token to policy update can vary substantially. GMTS addresses this limitation by leveraging a gradient-magnitude proxy for token selection. Empirically, GMTS achieves consistent improvements over ETS and remains robust across multiple selection ratios, model scales, and evaluation domains. These results provide empirical evidence supporting GMTS as an effective and efficient choice for RLVR.

\section*{Limitations}
% Despite the broad performance gain of GMTS in RLVR, there are several limitations that provide avenues for future research.

% First, the effectiveness of GMTS is supported by extensive empirical evaluations. Although we provide preliminary insights into the relationship between gradient and entropy, a rigorous theoretical justification for the advantage of GMTS over ETS, together with a deeper analytical characterization, remains an important direction for future work.

% Second, due to computational constraints, we are currently unable to evaluate GMTS on larger models, such as 14B and 32B models. Nevertheless, based on the empirical results obtained across different reasoning domains and model scales, we believe that GMTS remains promising for larger-scale models. We leave a more comprehensive evaluation on larger models to future work.

Despite the consistent performance gains of GMTS in RLVR, several limitations remain and point to directions for future research.

First, although extensive empirical evaluations support the effectiveness of GMTS, its theoretical foundation remains only partially understood. In this work, we provide preliminary evidence on the relationship between token entropy and gradient magnitude, but a rigorous theoretical justification for why GMTS outperforms ETS, along with a deeper analytical characterisation of its behaviour, remains an important direction for future work.

Second, due to computational constraints, we have not yet evaluated GMTS on larger models, such as 14B and 32B models. While our empirical results across different reasoning domains and model scales suggest that GMTS has the potential to extend to larger-scale models, a more comprehensive evaluation on such models is left for future work.

\section*{Acknowledgments}

We thank the SJTU supercomputer center for providing the computing services and the anonymous reviewers very much for their careful reading and valuable comments. This work was supported by the National Natural Science Foundation of China (Grant No. 125B2026) and the Natural Science Foundation of Chongqing, China (No. CSTB2023NSCQ-LZX0054).

% \bibliography{colm2026_conference}
% \bibliographystyle{colm2026_conference}

\bibliography{custom}

\appendix
\section{Appendix}
\label{sec:appendix}

\subsection{More Experimental Results}
\label{app: experiments}

We provide additional complete numerical results for \cref{ablationstudy} in the main text, with detailed values reported in \cref{tab:results_inverse,tab:ABL}. The Bottom selection experiment in \cref{tab:results_inverse} is designed to empirically examine whether the top ranked tokens under GMTS contain more training beneficial information than those selected by ETS. In addition, as shown in \cref{tab:ABL}, GMTS clearly outperforms ETS in 9 out of 10 GRPO/DAPO settings, which empirically shows that GMTS generally remains superior to ETS across different top-selection ratios.

\begin{table*}[h!]
  \centering
  \caption{Evaluation on five math-reasoning benchmarks (ignore AIME2025) for ETS/GMTS with DAPO/GRPO on \textbf{Qwen2.5-math-1.5B} under \textit{average@16} accuracy (\%).}
  \label{tab:results_inverse}
  \begin{adjustbox}{max width=\linewidth}
  \begin{threeparttable}
    \begin{tabular}{l *{6}{S[table-format=2.2]}}
      \toprule
      & \multicolumn{1}{c}{AIME2024} & \multicolumn{1}{c}{AMC23} & \multicolumn{1}{c}{MATH-500} & \multicolumn{1}{c}{Minerva} & \multicolumn{1}{c}{OlympiadBench} & \multicolumn{1}{c}{Avg.} \\
      \cmidrule(lr){2-7}
      \textbf{Qwen2.5-math-1.5B} & 2.70 & 18.75 & 23.40 & 6.20 & 14.10 & 13.03 \\
      \midrule
      {DAPO}         & 14.17 & 50.62 &  73.85 & 29.47 & 37.17 & 41.06 \\
      
      {+ ETS Bottom (80\%)} & 12.22 & 47.17 & 65.52 & 23.77 & 30.40 & 35.81 \\
      {+ ETS Bottom (90\%)} & 13.11 & 49.50 & 70.60 & 28.73 & 34.49 & 39.29 \\

\rowcolor{LightCyan}      
      {+ GMTS Bottom (80\%)}           &  12.67 &  46.67 & 63.11 & 22.08 & 29.14 & 34.73 \\
\rowcolor{LightCyan}
      {+ GMTS Bottom (90\%)}           &  12.44 &  50.17 & 68.35 &  26.74 & 32.33 &  38.01 \\
      
      \bottomrule
    \end{tabular}
  \end{threeparttable}
  \end{adjustbox}
\end{table*}

\begin{table*}[h!]
  \centering
  \caption{Evaluation on five math-reasoning benchmarks for ETS/GMTS with DAPO/GRPO on \textbf{Qwen2.5-math-1.5B} under \textit{average@16} accuracy (\%) in different selected ratios.}
  \label{tab:ABL}
  \begin{adjustbox}{max width=\linewidth}
  \begin{threeparttable}
    \begin{tabular}{l *{6}{S[table-format=2.2]}}
      \toprule
      & \multicolumn{1}{c}{AIME2024} & \multicolumn{1}{c}{AMC23} & \multicolumn{1}{c}{MATH-500} & \multicolumn{1}{c}{Minerva} & \multicolumn{1}{c}{OlympiadBench} & \multicolumn{1}{c}{Avg.} \\
      \cmidrule(lr){2-7}
      
      \textbf{Qwen2.5-math-1.5B} & 2.70 & 18.75 & 23.40 & 6.20 & 14.10 & 13.03 \\
      \midrule
      {DAPO}         & 14.17 & 50.62 &  73.85 & 29.47 & 37.17 & 41.06  \\
      \midrule
      { + ETS Top (90\%)} & \textbf{13.96} & 51.72 & 73.97 & \textbf{30.36} & 24.58 & 38.92  \\
\rowcolor{LightCyan}
      \textbf{ + GMTS Top (90\%)}         &  11.04 &  \textbf{52.34} & \textbf{74.20} &  \textbf{30.36} &  \textbf{37.61} & \textbf{41.11}  \\
      \midrule
      { + ETS Top (70\%)} & \textbf{12.08} & 48.44 & \textbf{74.07} & \textbf{29.26} & 36.31 & 40.03  \\
\rowcolor{LightCyan}
      \textbf{ + GMTS Top (70\%)}         &  11.46 &  \textbf{52.50} & 74.00 &  28.80 &  \textbf{37.40} & \textbf{40.83}  \\
      \midrule
      { + ETS Top (50\%)} & 12.50 & 52.81 & 73.22 & 29.55 & \textbf{36.97} & 41.01  \\
\rowcolor{LightCyan}
      \textbf{ + GMTS Top (50\%)}         &  \textbf{13.54} &  \textbf{53.44} & \textbf{73.66} &  \textbf{29.89} &  36.50 & \textbf{41.41}  \\
      \midrule
      { + ETS Top (10\%)} & 12.71 & 46.88 & 70.03 & 27.08 & 32.93 & 37.93  \\
\rowcolor{LightCyan}
      \textbf{ + GMTS Top (10\%)}         &  \textbf{13.33} &  \textbf{52.19} & \textbf{73.53} &  \textbf{29.72} &  \textbf{35.68} &  \textbf{40.89}  \\
      \midrule
      
      \midrule
      {GRPO}         & 10.63 & 48.12 & 71.59 & 28.81 & 34.90 & 38.81 \\
      \midrule

      { + ETS Top (90\%)} &  11.56 & 50.00 & \textbf{72.73} & 29.61 & \textbf{36.07} & 40.00 \\
\rowcolor{LightCyan}
      \textbf{ + GMTS Top (90\%)}         & \textbf{15.33} & \textbf{51.00} &  72.68 &  \textbf{30.51} &  35.91 & \textbf{41.09} \\
      \midrule

      { + ETS Top (70\%)} &  11.88 & 50.62 & 73.51 & 30.31 & 36.61 & 40.59 \\
\rowcolor{LightCyan}
      \textbf{ + GMTS Top (70\%)}         & \textbf{12.92} & \textbf{52.50} &  \textbf{74.22} &  \textbf{30.61} &  \textbf{37.11} & \textbf{41.47} \\
      \midrule

      { + ETS Top (50\%)} &  \textbf{12.92} & \textbf{50.78} & \textbf{73.46} & \textbf{29.25} & \textbf{37.86} & \textbf{40.85} \\
\rowcolor{LightCyan}
      \textbf{ + GMTS Top (50\%)}         & 10.63 & 49.38 &  72.95 &  28.75 &  36.09 & 39.56 \\
      \midrule

      { + ETS Top (10\%)} &  10.63 & 45.94 & 70.00 & 27.30 & 32.97 & 37.37 \\
\rowcolor{LightCyan}
      \textbf{ + GMTS Top (10\%)}         & \textbf{15.00} & \textbf{49.84} &  \textbf{72.24} &  \textbf{29.60} &  \textbf{35.65} & \textbf{40.47} \\

      \bottomrule
    \end{tabular}
  \end{threeparttable}
  \end{adjustbox}
\end{table*}

\subsection{Derivation of token gradient}\label{app: derivation}

\noindent\textbf{Proposition 3.1:}
Let $G_{t}$ be the $\ell_2$-norm of the gradient of the log-probability $\log \pi_{\theta}(o_t|\bd{q}, \bd{o}_{<t}) $ with respect to the logits $z_t$, $E_t$ is the entropy of distribution $\pi_{\theta}(\cdot|\bd{q}, \bd{o}_{<t})$. Let $1-\pi_{\theta}(o_{t}|\bd{q}, \bd{o}_{<t})=\varepsilon$, then we have:
    $$\lim_{\varepsilon\rightarrow0} \frac{\log G_{t}}{\log E_t} = 1.$$

\begin{proof}
Let $p_t=\pi_{\theta}(\cdot\mid \bd{q}, \bd{o}_{<t})$

$$
\begin{aligned}
\nabla_z \log(p_{t,i})
&= \frac{1}{p_{t,i}}\nabla_z p_{t,i}\\
&= \frac{1}{p_{t,i}}\big[p_{t,i}(e_{i}-p_t)\big]\\
&= e_{i} - p_t,
\end{aligned}
$$
where $e_{i}$ is the unit vector whose $i$-th entry is $1$ and all other entries are $0$. Let $\pi_{\theta}(o_{t}|\bd{q}, \bd{o}_{<t})$ = $p_{t,i}$, we can get:
$$
G_t = \sqrt{(1-p_{t,i})^2 + \sum_{j\neq i} p_{t,j}^2}.
$$
According to $1-\pi_{\theta}(o_{t}|\bd{q}, \bd{o}_{<t})=\varepsilon$, then we will have 

$$
\begin{aligned}
    \log G_t &=\log(\sqrt{\varepsilon^2 \left(1+\sum_{j\neq i} q_{t,j}^2\right)})\\&=\log \varepsilon + \log(\sqrt{1+\sum_{j\neq i} q_{t,j}^2}).
\end{aligned}
$$
where $q_{t,k}=\frac{p_{t,k}}{\varepsilon}$.

For entropy $E_t$, we have:
$$
\begin{aligned}
E_t &=  -\sum_{j}p_{t,j} \log p_{t,j} \\ &=-p_{t,i} \log p_{t,i} - (1-p_{t,i})\log(1-p_{t,i})\\ &-(1-p_{t,i}) \sum_{j \neq i} \frac{p_{t,j}}{1-p_{t,i}} \log \frac{p_{t,j}}{1-p_{t,i}}
\end{aligned}
$$
Notice that $-\sum_{j \neq i} \frac{p_{t,j}}{1-p_{t,i}} \log \frac{p_{t,j}}{1-p_{t,i}}$ is the entropy of the renormalized residual distribution after removing the $i$-th probability, we call it $E(p_t/p_{t,i})$, then we have:
$$
E_t = -(1-\varepsilon)\log(1-\varepsilon)-\varepsilon\log\varepsilon + \varepsilon E(p_t/p_{t,i})
$$
Using the Taylor expansion $\log(1-\varepsilon)=-\varepsilon-\frac{\varepsilon^2}{2}+O(\varepsilon^3)$.
We get
$$
\begin{aligned}
&\log E_t\\
&=\log (\varepsilon -\varepsilon \log \varepsilon +  \varepsilon E(p_t/p_{t,i}) -\frac{\varepsilon^2}{2}+O(\varepsilon^3))\\
&=\log\varepsilon + \log(1-\log\varepsilon+ E(p_t/p_{t,i}) -\frac{\varepsilon}{2}+O(\varepsilon^2)).
\end{aligned}
$$
Thus, we have:
$$
\begin{aligned}
&\lim_{\varepsilon\rightarrow0} \frac{\log G_{t}}{\log E_t}  = \\ &\lim_{\varepsilon\rightarrow0} \frac{\log \varepsilon + \log(\sqrt{1+\sum_{j\neq i} q_{t,j}^2})}{\log\varepsilon + \log(1-\log\varepsilon+ E(p_t/p_{t,i}) -\frac{\varepsilon}{2}+O(\varepsilon^2))}\\
&=1.
\end{aligned}
$$
\end{proof}

\subsection{Training Details}\label{app: training_details}

For fairness and reproducibility, we fix the training seed to \textbf{0} in all the experiments, which are performed on eight NVIDIA H100 GPUs, each equipped with 80 GB of memory.

\textbf{Hyperparameter settings for Qwen2.5-math-1.5B and Qwen2.5-math-7B.} 

For the Qwen2.5-math-1.5B and Qwen2.5-math-7B models, the training configurations are as follows: the maximum response length is set to 2048 tokens, and the maximum prompt length to 1024 tokens. Each training batch contained 64 questions, with a mini-batch size of 64. For each question, 16 rollouts are sampled with a temperature of 1.0. The clipping parameters $\epsilon_1$ and $\epsilon_2$ are set to 0.2 and 0.28. The learning rate is fixed at $3\times 10^{-5}$ without applying any learning rate warm-up or scheduling.

\textbf{Hyperparameter settings for Qwen3-8B.} 

Due to computational constraints, we could not fully adopt the settings of \citep{wang2025beyond}, and therefore adjusted the maximum output length. Specifically, we set the maximum response length to 4096 tokens and the maximum prompt length to 1024 tokens. Each training batch contained 512 questions and a mini-batch size of 32, resulting in 16 gradient steps per training batch. For each question, 16 rollouts are sampled with a temperature of 1.0. The clipping parameters $\epsilon_1$ and $\epsilon_2$ are set to 0.2 and 0.28. The learning rate is fixed at $1\times 10^{-6}$ without applying any learning rate warm-up or scheduling.

\textbf{Hyperparameter settings for Qwen2.5-coder-1.5B and Qwen2.5-coder-7B.} 

For the Qwen2.5-coder-1.5B and Qwen2.5-coder-7B models, the training configurations are as follows: the maximum response length is set to 32768 tokens, and the maximum prompt length to 2048 tokens. Each training batch contained 512 questions, with a mini-batch size of 32. For each question, 16 rollouts are sampled with a temperature of 1.0. The clipping parameters $\epsilon_1$ and $\epsilon_2$ are set to 0.2 and 0.28. The learning rate is fixed at $1\times 10^{-6}$ without applying any learning rate warm-up or scheduling.

\textbf{Hyperparameter settings for Qwen2.5-base-1.5B and Qwen2.5-base-7B.} 

For the Qwen2.5-base-1.5B and Qwen2.5-base-7B models, the training configurations are as follows: the maximum response length is set to 4096 tokens, and the maximum prompt length to 1024 tokens. Each training batch contained 512 questions, with a mini-batch size of 32. For each question, 16 rollouts are sampled with a temperature of 1.0. The clipping parameters $\epsilon_1$ and $\epsilon_2$ are set to 0.2 and 0.28. The learning rate is fixed at $1\times 10^{-6}$ without applying any learning rate warm-up or scheduling.

\textbf{Dialogue template}

For the math and CS domains, we adopt the following dialogue template:
\begin{lstlisting}
<|im_start|>system\nPlease reason step by step, and put your final answer within \boxed{}.<|im_end|>\n<|im_start|>user\n"<|im_end|>\n<|im_start|>assistant\n
\end{lstlisting}

For the code domain, we adopt the following dialogue template:
\begin{lstlisting}
<|im_start|>system\nYou are Qwen, created by Alibaba Cloud. You are a helpful assistant.<|im_end|>\n<|im_start|>user\n Please provide a self-contained Python script that solves the following problem:\n {question}\nYou will use the following starter code to write the solution to the problem:\n```python\n{starter_code}\n```\n Please reason step by step, and put your code answer in a markdown code block.
\end{lstlisting}

\subsection{An Example for ETS Top (20\%) and GMTS (20\%) with DAPO on Qwen3-8B under AIME2025}

In DAPO on Qwen3-8B, we selected an example under the \textbf{ETS Top (20\%)} and \textbf{GMTS Top (20\%)} settings to illustrate that the output of GMTS is not only more concise but also more accurate compared to ETS:

\begin{llmframe}[Question and Ground-Truth]
\footnotesize
\textbf{Question}: 
Find the number of ordered pairs $(x,y)$, where both $x$ and $y$ are integers between $-100$ and $100$, inclusive, such that $12x^{2}-xy-6y^{2}=0$.

\textbf{Ground-Truth}:
\[
\boxed{117}
\]

\end{llmframe}

\begin{llmframe}[ETS Top (20\%)]
\footnotesize
Okay, so I need to find the number of ordered pairs $(x, y)$ where both $x$ and $y$ are integers between $-100$ and $100$ inclusive, and they satisfy the equation 
\[
12x^2 - xy - 6y^2 = 0.
\]

Let me think about how to approach this.

First, the equation is quadratic in both $x$ and $y$. Maybe I can factor it or find some relationship between $x$ and $y$. Let me try rearranging the equation. Let me write it again:

\[
12x^2 - xy - 6y^2 = 0.
\]

Hmm. Maybe I can factor this quadratic equation. Let me see. If I treat this as a quadratic in $x$, then maybe I can factor it or use the quadratic formula. Alternatively, maybe factor it as a quadratic in $y$. Let me try both approaches.

Let me try treating it as a quadratic in $x$. So,
\[
12x^2 - yx - 6y^2 = 0.
\]
So, in the form $ax^2 + bx + c = 0$, where $a = 12$, $b = -y$, and $c = -6y^2$. Then, using quadratic formula:

\[
x = \frac{y \pm \sqrt{y^2 - 4 \cdot 12 \cdot (-6y^2)}}{2 \cdot 12}.
\]

Calculating discriminant:

\[
D = y^2 - 4 \cdot 12 \cdot (-6y^2) = y^2 + 288y^2 = 289y^2.
\]

So
\[
\sqrt{D} = \sqrt{289y^2} = 17|y|.
\]
Since $y$ is an integer, $|y|$ is non-negative, so $\sqrt{D}$ is $17|y|$.

Therefore,
\[
x = \frac{y \pm 17|y|}{24}.
\]

---

Case 1: $y \geq 0$. Then $|y| = y$. So
\[
x = \frac{y \pm 17y}{24}.
\]
So two possibilities:
\[
x = \frac{y + 17y}{24} = \frac{18y}{24} = \frac{3y}{4},
\]
or
\[
x = \frac{y - 17y}{24} = \frac{-16y}{24} = -\frac{2y}{3}.
\]

Case 2: $y < 0$. Then $|y| = -y$. So
\[
x = \frac{y \pm 17(-y)}{24}.
\]
Let's compute:

First, with the plus sign:
\[
\frac{y + (-17y)}{24} = \frac{-16y}{24} = -\frac{2y}{3}.
\]

With the minus sign:
\[
\frac{y - (-17y)}{24} = \frac{y + 17y}{24} = \frac{18y}{24} = \frac{3y}{4}.
\]

So regardless of the sign of $y$, the solutions for $x$ are
\[
x = \frac{3y}{4} \quad \text{or} \quad x = -\frac{2y}{3}.
\]

Therefore, for each $y$, $x$ must be either $\tfrac{3y}{4}$ or $-\tfrac{2y}{3}$. But since $x$ has to be an integer, this imposes conditions on $y$. Therefore, $3y/4$ must be integer or $-2y/3$ must be integer. Therefore:

Either $4 \mid 3y$ or $3 \mid 2y$.

But since 3 and 4 are coprime, $4 \mid 3y$ implies $4 \mid y$. Similarly, $3 \mid 2y$ implies $3 \mid y$. Therefore, for $x$ to be integer:

Either $y$ is divisible by $4$, or $y$ is divisible by $3$. Therefore, the solutions are:

- For each $y$ divisible by 4, $x = 3y/4$ is integer.
- For each $y$ divisible by 3, $x = -2y/3$ is integer.

If $y$ is divisible by both 3 and 4 (i.e., by 12), then both $x$'s would be integers.

---

Therefore, the total number of solutions would be the number of $y$'s divisible by 4 plus the number of $y$'s divisible by 3. However, we have to be careful not to double count the cases where $y$ is divisible by both 3 and 4, i.e., by 12. Because those $y$'s would be counted in both sets. Therefore, using inclusion-exclusion, total solutions would be:

\[
N = (\#\{y: 4 \mid y\}) + (\#\{y: 3 \mid y\}) - (\#\{y: 12 \mid y\}).
\]

---

Now we compute.

Range: $-100 \leq y \leq 100$, i.e., 201 integers.

- Number of $y$ divisible by 4: $\lfloor 100/4 \rfloor \times 2 + 1 = 25 \times 2 + 1 = 51$.
- Number of $y$ divisible by 3: $\lfloor 100/3 \rfloor \times 2 + 1 = 33 \times 2 + 1 = 67$.
- Number of $y$ divisible by 12: $\lfloor 100/12 \rfloor \times 2 + 1 = 8 \times 2 + 1 = 17$.

Thus
\[
N = 51 + 67 - 17 = 101.
\]

---

Check special cases:

- $y = 0$. Then the equation is $12x^2 = 0 \implies x=0$. So $(0,0)$ is one solution. This is correctly counted once by the formula.

- $y = 4$. Then $x = 3$. Solution: $(3,4)$ works.

- $y = 3$. Then $x = -2$. Solution: $(-2,3)$ works.

- $y = 12$. Then $x=9$ or $x=-8$, both valid.

All checks out.

---

Finally, also check $x$ range: $-100 \leq x \leq 100$. For $|y| \leq 100$, $x = \tfrac{3y}{4}$ or $x = -\tfrac{2y}{3}$ also lies in $[-100,100]$. Verified.

---

\[
\boxed{101}
\]

\end{llmframe}

\begin{llmframe}[GMTS Top (20\%)]
\footnotesize
Okay, so I need to find the number of ordered pairs $(x, y)$ where both $x$ and $y$ are integers between $-100$ and $100$ inclusive, and they satisfy the equation $12x^2 - xy - 6y^2 = 0$. Hmm, let me think about how to approach this.

First, maybe I can try to factor the equation or simplify it somehow. Let me write down the equation again:

\[
12x^2 - xy - 6y^2 = 0.
\]

Hmm, quadratic in terms of $x$ and $y$. Maybe I can treat this as a quadratic equation in $x$ or $y$? Let me try treating it as a quadratic in $x$. Let me rearrange the terms:

\[
12x^2 - xy - 6y^2 = 0.
\]

If I consider this as a quadratic in $x$, then it's of the form $ax^2 + bx + c = 0$, where:

\[
a = 12, \quad b = -y, \quad c = -6y^2.
\]

Then, using the quadratic formula, 

\[
x = \frac{-b \pm \sqrt{b^2 - 4ac}}{2a} = \frac{y \pm \sqrt{y^2 - 4\cdot 12\cdot (-6y^2)}}{24}.
\]

Simplify inside the square root:

\[
y^2 - 4\cdot 12 \cdot (-6y^2) = y^2 + 288y^2 = 289y^2.
\]

So 
\[
\sqrt{289y^2} = 17|y|.
\]

Therefore,

\[
x = \frac{y \pm 17|y|}{24}.
\]

---

Case 1: $y \geq 0$. Then $|y| = y$. Therefore:

\[
x = \frac{y \pm 17y}{24}.
\]

So two possibilities:

\[
x = \frac{18y}{24} = \frac{3y}{4}, \quad \text{or} \quad x = \frac{-16y}{24} = -\frac{2y}{3}.
\]

Case 2: $y < 0$. Then $|y| = -y$. Therefore:

\[
x = \frac{y \pm 17(-y)}{24}.
\]

So:

First possibility: $y + (-17y) = -16y$, hence $x = -\frac{2y}{3}$.

Second possibility: $y - (-17y) = 18y$, hence $x = \frac{3y}{4}$.

---

So regardless of the sign of $y$, the solutions for $x$ are:

\[
x = \frac{3y}{4} \quad \text{or} \quad x = -\frac{2y}{3}.
\]

Therefore, the solutions to the equation are all pairs $(x,y)$ with $x,y \in [-100,100]$ such that either $x = 3y/4$ or $x = -2y/3$, with integrality conditions.

---

\textbf{Case 1:} $x = 3y/4$. \\
For $x$ to be integer, $y$ must be divisible by 4. Let $y=4k$, then $x=3k$. Range: $-100 \leq 4k \leq 100 \implies -25 \leq k \leq 25$. That gives $25-(-25)+1=51$ solutions.

\textbf{Case 2:} $x = -2y/3$. \\
For $x$ to be integer, $y$ must be divisible by 3. Let $y=3m$, then $x=-2m$. Range: $-100 \leq 3m \leq 100 \implies -33 \leq m \leq 33$. That gives $33-(-33)+1=67$ solutions.

---

\textbf{Overlap check:} \\
Suppose $(x,y)$ satisfies both forms: $(x,y)=(3k,4k)$ and $(x,y)=(-2m,3m)$. Then $3k=-2m$ and $4k=3m$. Solving gives $k=0,m=0$, hence $(0,0)$. So there is exactly one overlap.

---

\textbf{Final count:}
\[
51 + 67 - 1 = 117.
\]

\[
\boxed{117}
\]
\end{llmframe}

\end{document}